\theoremstyle{thmstyleone}%
\newtheorem{theorem}{Theorem}
\newtheorem{proposition}[theorem]{Proposition}%
\theoremstyle{thmstyletwo}%
\theoremstyle{thmstylethree}%
\newtheorem{definition}{Definition}%
\begin{document}

\title[Goal Recognition by a Minimally Pre-Trained System etc.]{Object Recognition by a Minimally Pre-Trained System in the Process of Environment Exploration.}

\author*[1]{\fnm{First} \sur{Dmitry Maximov}}\email{dmmax@inbox.ru}

\author[1]{\fnm{Second} \sur{Sekou A. K. Diane}}\email{diane1990@yandex.ru}

\affil*[1]{\orgdiv{Lab 29}, \orgname{Trapeznikov Institute of Control Science Russian Academy of Sciences}, \orgaddress{\street{65, Profsoyuznaya st.}, \city{Moscow}, \postcode{117997}, \country{Russia}}}

\abstract{
We update the method of describing and assessing the process of the study of an abstract environment by a system, proposed earlier. We do not model any biological cognition mechanisms and consider the system as an agent equipped with an information processor (or a group of such agents),  which makes a move in the environment, consumes information supplied by the environment, and gives out the next move (hence, the process is considered as a game). The system moves in an unknown environment and should recognize new objects located in it. In this case, the system should build comprehensive images of visible things and memorize them if necessary (and it should also choose the current goal set). The main problems here are object recognition, and the informational reward rating in the game. Thus, the main novelty of the paper is a new method of evaluating the amount of visual information about the object as the reward. In such a system, we suggest using a minimally pre-trained neural network to be responsible for the recognition: at first, we train the network only for Biederman geons (geometrical primitives). The geons are generated programmatically and we demonstrate that such a trained network recognizes geons in real objects quite well. We also offer to generate, procedurally, new objects from geon schemes (geon combinations in images) obtained from the environment and to store them in a database. In this case, we do not obtain new information about an object (i.e., our reward is maximal, thus the game and the object cognition process stop) when we stop getting new schemes of this kind. These schemes are generated from geons connected with the object. In the case of a possibly known item, the informational reward is maximal when we have no more detection uncertainty for any of the objects.}

\keywords{Robot intellect, Cognition, Conway game semantics, Synthesis of training sets, Image classification}
\pacs[MSC Classification]{68T05,  68T27, 68T40, 68T10}

\maketitle

\section{Introduction}
Universal artificial intelligence (UAI) is a unifying framework and a general formal foundational
theory for artificial intelligence investigations \cite{Hutt18}. Its primary goal is to give a mathematical answer to the question: what is the right thing to do in an unknown environment, and how can an intelligent system learn to behave in the environment? Such a learning process should be active in this case.``With active learning, the [robotic] system may deviate from the implementation of the main application and perform actions aimed at collecting information about the environment.'' \cite{seku}.

Investigations in the field are focused on systems which \emph{act} rationally. The artificial intelligence is represented in the approach as an information processor that consumes and gives out information. The theory also tries to answer, in general, the question, ``how can a system composed of relatively unintelligent parts (say, neurons or transistors) behave intelligently?'' \cite{Cass}.

A formal description of the most intelligent agent behaviour, in the sense of some intelligence measure, is suggested in the UAI framework \cite{HuttB}, \cite{Hutt}. The framework specifies how an agent \emph{interacts} with an environment. The model is based on probabilistic modelling of the environment, and determination of the next system move, based on previous experience. Also, it is based on a numerical estimation of the system position reward, and the expected reward maximization along the trajectory. However, the method to obtain this numerical estimation is absent.

It has been demonstrated in \cite{Maximov_17}, \cite{Maximov}, \cite{UBS}, that the structure existence (a lattice structure or else a monoid structure, strictly, the linear logic structure) in the system task \cite{Maximov_17}, \cite{Maximov}, or goal \cite{UBS} set is sufficient for the system to behave quite reasonably. The behaviour even looks like that of an ant in some things \cite{Maximov_17}. Thus, we may suppose that system intelligence is the consequence of the system's purpose or predestination to use in practice. The approach does not assume environment modelling, unlike \cite{HuttB}.

In this paper, we refine the results of \cite{assa}, \cite{cog}\footnote{We give the improved notion of the payoff function in the game category used, correct proofs of Propositions \ref{th1} and \ref{th2}, and clarify the whole construction.} and develop the approaches mentioned above. We are based on the idea that it is possible to represent the study process of an environment as fulfilling of parallel achievement processes of different objects in the environment.  A tensor multiplication in a linear logic corresponds to these parallel processes. The logic is modelled in some game category \cite{resource} (first mention in \cite{abram}, \cite{abram1}). Thus, it is possible to describe the objects' achievement process by the intelligent system in the environment as a game. Position rewards in the game are represented by some sets which define the information about goals. The process of obtaining information is just the process of environment cognition.

Thus, the other problems, we investigate in this paper, are recognizing goals and the method of estimating information to obtain the game rewards. Usually, such a system has a pre-existing set of goals and a subsystem that recognizes them. However, what to do in an unknown environment? We suppose in this paper that the system behaves like a baby: it moves toward objects which have attracted its attention, chooses the most attractive item, and  looks at it from all sides. Thus, such an artificial intelligence system must be able to highlight unknown objects in the environment which were not specified in its pre-existing goal (object) set. To do this, we suggest using Biederman's theory of human image understanding \cite{bieder}. In this theory, people recognize object types by schemes that consist of geometrical primitives. Therefore, we can initially train the system's neural network to recognize these primitives only, and to build their incidence matrixes in the environment images. Then, unknown objects (combinations of primitives in the environment) with the same set of primitives and the same incidence matrix would belong to one object type. Thus, these objects may be remembered in a database.

While the system is moving, such an unknown object becomes more discernible, and the set of its primitives obtains new elements. Thus, we get an increment of information about the object. When this thing is studied from all sides, and we have no more information increment, the process of the object cognition stops. Therefore, we may take these sets of geometrical primitives as the rewards in the game which describe the system move in the environment. Similarly, while the system is moving toward a possibly known object, the uncertainty of its identification diminishes, and the set of possible identification variants may be taken as the game anti-reward (which decreases during the game in this case).

Our method of object recognition belongs to the class of local-based recognition methods \cite{tits}. However, these methods may consider any kind of local features that
described a local area of the object instead of having appearance-based parts \cite{weber}. This technique requires a data set composed of training images to select parts of the thing to be later recognized. Before training, the method cannot discriminate between multiple object classes by itself. A similar technic is used in \cite{fergus} with other methods of part selection and another learning algorithm. A more complicated approach is described in \cite{felzen2000},\cite{felzen2004}, \cite{felzen2005}, \cite{nuch}, \cite{zuffi} which uses the joint model of the parts to facilitate the detection of the features. Each part encodes local visual properties of the object, and the deformable configuration is characterized by a tree structure of connections between certain pairs of components. However, this method also requires a set of images to train part selection and to create an object class template.

On the contrary, we know features in advance in our method, and discriminate between different object classes immediately, without any training. Also, in the first steps, we do not determine these classes precisely. The information obtained is increased from step to step up to the end. After that, we get an item in the database of such classes as a set which consist of all the object schemes at once. The item represents from all sides a comprehensive image of the corresponding object at the end of the process of learning it. Subsequently, a graphical skeleton of these geon schemes could be generated, which matches the item, and a neural network can be trained to store such a 3D type as a new object as in \cite{seku1} (see Sec. \ref{sec:3.2}).


The idea of the latter image recognition approach (\cite{seku1}) is to place items of a typical shape in a different orientation in a virtual environment and to get their images. These objects are generated from some models, and the item in our database can serve as such a model. The position and the appearance of each object in each perspective are known immediately without additional calculations. In our article, we initially have  such images of basic geometric primitives only. In the process of the environment cognition, we could add more intricate object schemes in different orientations, built from primitives, as a new item in the memory. Initially, this memory is the database; afterwards, these items can be stored in the network.

The second difference of our method from the rest is that we store shape information in a static incidence matrix and use neither a probabilistic method for this purpose nor the procedure of minimizing  the energy of the constellation of parts.

The paper is organized as follows. Section \ref{sec:1} presents the backgrounds necessary for the understanding of the results. We adapt the theory of \cite{resource} to the paper purposes in Subsection \ref{subsec:2}. Section \ref{sec:3} represents a formal description of intelligent system behaviour. Here, we also refine, improve and adapt the previous results. Section \ref{sec:3.2} represents using the neural network to gain the informational rewards in the game which describes the behaviour. In Sec. \ref{rew_val} we discuss the reward valuations. In Sec.\ref{concl}, we conclude the paper.

\section{Backgrounds}
\label{sec:1}
\newtheorem{Th}{Supposition}
\subsection{Lattices}\label{subsec:1}
We suppose visible objects and, hence, the game rewards form different lattices. Thus, we start with them.
\begin{definition}
A \textbf{partially-ordered set} $P$ is the set with such a binary relation  $x \leqslant y$ on its elements, that for all $x, y, z \in P$ the next relationships hold:
\begin{itemize}
\item	$x \leqslant x$ (reflexivity);
\item	if $x \leqslant y$ and $y \leqslant x$, then $x = y$ (anti-symmetry);
\item	if $x \leqslant y$ and $y \leqslant z$, then $x \leqslant z$ (transitivity).
\end{itemize}
\end{definition}

The definition means that in the partially-ordered set, not all elements are compared with each other. This property distinguishes these sets from linearly-ordered ones, i.e., from numeric sets which are ordered by a norm.  Thus, the elements of the partially-ordered set are the objects of a more general nature than numbers. In the partially-ordered set diagram, the greater the element (i.e., vertex, node), the higher it lies, and the elements that are compared with each other lie in the same path from a bottom element to a top one. An example of a partially-ordered set diagram is represented in Fig.~\ref{fig:1} which is also a lattice diagram.
\begin{figure}
  \includegraphics[scale=0.7]{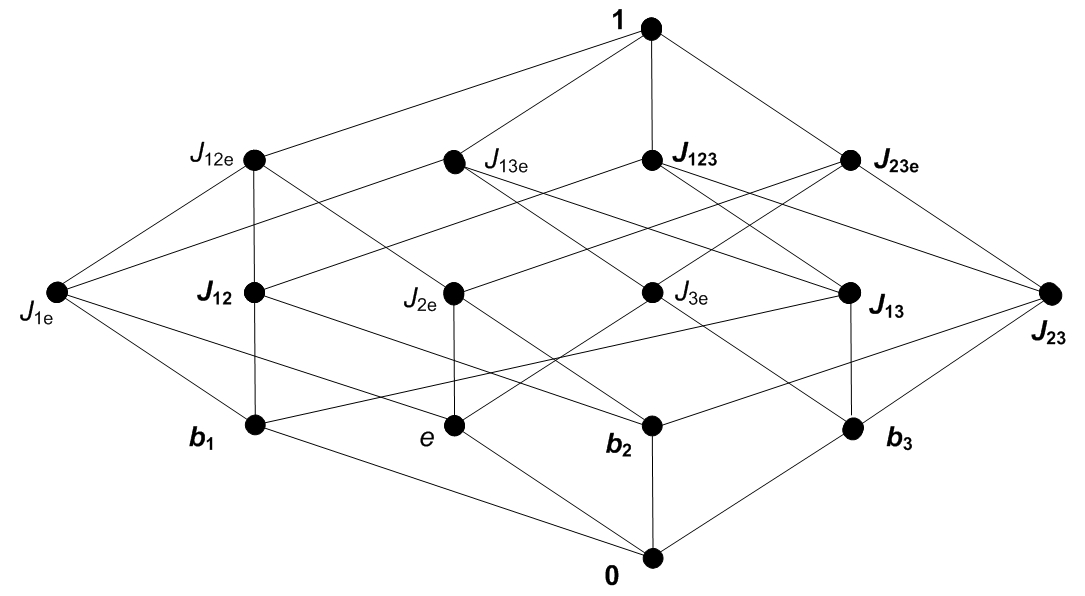}
\caption{A lattice example}
\label{fig:1}       
\end{figure}
\begin{definition}The \textbf{upper bound} of a subset $X\subset P$ in a partially-ordered set $P$ is the element $a\in P$ such that $x\leqslant a$ for all $x\in X$.
\end{definition}

The supremum or \emph{join} is the smallest subset $X$ upper  bound. The infimum or \emph{meet} defines dually as the greatest element $a\in P$ such that $a\leqslant x$ for all $x\in X$.
\begin{definition} A \textbf{lattice} is a partially-ordered set, in which every two elements have their meet, denoted by $x\wedge y$, and join, denoted by $x\vee y$.
\end{definition}

In the lattice diagram, the elements join is the nearest upper element to both of them, and the meet is the nearest lower one to both.
\begin{definition}\textbf{Generators} are such elements that generate all other elements by joins and meets (They are $b_{1}, e, b_{2}, b_{3}$ in Fig. 1).
\end{definition}

\begin{definition}The lattice is referred to as a \textbf{complete} lattice if its arbitrary subsets have the join and the meet.
\end{definition}Thus, any complete lattice has the biggest element 1, and the smallest one 0 and every finite lattice is complete \cite{Birkhoff}.

If we take such a lattice as a scale of truth values in multi-valued logic, then the largest element will correspond to complete truth (1), and the smallest to complete falsehood (0). Intermediate elements will correspond to partial truth in the same way as in fuzzy logic, partial truth is estimated by elements of the segment $[0,1]$.

In logics with such a scale of truth values, the implication can be determined through the multiplication of lattice elements (residual logics), or internally, only from lattice operations. \begin{definition}A lattice that has internal implications is called a \textbf{Brower} lattice.
\end{definition}
In such a lattice, the implication $c = a \Rightarrow b$ is defined as the largest $c: a\wedge b = a\wedge c$.
\begin{definition}The implication $\neg a = a \Rightarrow 0$ is called the \textbf{pseudo-complement}. \end{definition}
Distribution laws for union and intersection are satisfied in the Brower lattices. The converse is true only for finite lattices.

\subsection{Game Semantics}\label{subsec:2}
\begin{definition} \cite{resource}
A \textbf{Conway game} is defined as a rooted graph with vertices \emph{V} as the game positions and edges $E\subset V\times V$  as the game moves. Each edge has a \textbf{polarity} $\pm 1$  which depends on whether it is the Proponent or the Opponent move.
\end{definition}
\begin{definition} \cite{resource}
A trajectory or a \textbf{play} is some path from the graph root $\ast$. The path is \textbf{alternated} if the adjacent edges are of different polarities.
\end{definition}
\begin{definition} \cite{resource}
A \textbf{strategy} $\sigma$ of a Conway game is defined as a non-empty set of alternated plays (paths) of even length. They start from the Opponent move, closed up to the prefix of even length, i.e., for all plays \emph{s} and all moves \emph{m}, \emph{n},
$s \cdot m \cdot n \in \sigma$ implies $s \in \sigma$, and determined.  Determinism means that two different paths with a common prefix should coincide, i.e., for all plays \emph{s}, and all moves \emph{m}, \emph{n}, and \emph{n'},
$s \cdot m \cdot n \in \sigma$, and $s \cdot m \cdot n' \in \sigma$ implies $n = n'$.
\end{definition}
\begin{definition} \cite{resource}
A dual play $X^{\bot}$ is obtained from the play \emph{X} by reversing the
polarity of moves.
\end{definition}
\begin{definition} \cite{resource}
The tensor product $X\otimes Y$ of two Conway games \emph{X} and \emph{Y} is the product of the two underlying graphs, i.e., positions $x\otimes y$ are $V_{X\otimes Y}=V_{X}\times V_{Y}$ with the root $\ast_{X\otimes Y}=\ast_{X}\times \ast_{Y}$, moves are $x\otimes y\rightarrow \left\{\begin{aligned}z\otimes y; x\rightarrow z \;in \;X\\
                      x\otimes z; y\rightarrow z \;in \;Y\end{aligned}
\right.$ and the polarity of a move in $X\otimes Y$ is inherited from the polarity of the
underlying move in \emph{X} or \emph{Y}.
\end{definition}
Generalized linear logic is modelled in the category \textbf{Conw} of such games \cite{resource}. The category objects are Conway games, and morphisms $X\rightarrow Y$  are strategies in  $X^{\bot}\parr Y$. The definition of the categorical construction of the operation $\parr$, which is dual to tensor $\otimes$, is not discussed here for simplicity because this is not essential for our description. It is enough to mention that on game graphs, these two operations are the same, so in \cite{resource}, they are not even distinguished. The morphism composition and identity morphism are apparent \cite{resource}. We do not need the construction of the linear logic here, except the notion of linear implication.
\begin{definition} \cite{resource}, \cite{llp}
The linear implications $X\multimap Y$ in the category are defined as

$X\multimap Y = X^{\bot}\parr Y$
\end{definition}
\noindent since the category is symmetric monoidal closed (thus, we may define linear logic and the implication in the category).

A Conway game \emph{X} with a payoff is the game with an additional weight $k_{X} = \{1, 1/2, 0\}$ in each vertex \cite{resource}.  The weight depends on whether the position is winning or not. In the tensor production and implication, these weights obey the usual rules of conjunction and implication for such truth values scale \cite{resource}. Thus, the Conway's payoff game $X\otimes Y$ is defined as the underlying Conway game $X\otimes Y$, equipped with the payoff function $k_{X\otimes Y}(x\otimes y) = k_{X}(x)\wedge k_{Y}(y)$. The Conway's payoff game $X\multimap Y$ is defined as the underlying Conway game $X\multimap Y$, equipped with the payoff function $k_{X\multimap Y}(x \multimap y) = k_{X}(x)\Rightarrow k_{Y}(y)$. A strategy $\sigma$ on a Conway's payoff game \emph{X} is winning when every play $s : x \mapsto y$ in the strategy ends in a winning position $y$, i.e., in a position of payoff 1/2 or 1.

It is possible to prove that the categorical construction is conserved if the weights' numbers are replaced with some sets which form a Brouwer lattice. Also, the operations of \cite{resource} must be replaced with lattice operations. Thus, we may declare by definition
\begin{equation}\label{eq1}k_{X\otimes Y}(x\otimes y) = k_{X}(x)\wedge k_{Y}(y)
\end{equation}with lattice $\wedge$,
\begin{equation}\label{eq2}k_{X^{\bot}\parr Y}(x^{\bot}\parr y) = \neg k_{X}(x)\vee k_{Y}(y)
\end{equation}with lattice $\vee$, and
\begin{equation}\label{eq3}k_{X\multimap Y}(x \multimap y) = k_{X}(x)\Rightarrow k_{Y}(y)
\end{equation}where $\Rightarrow$ is now the lattice implication. Both of the last two definitions are suitable as the payoff function of the game $X\multimap Y$. The first definition is a particular case of the second one in Boolean lattices, and it may be used in practice as more convenient.

However, we must emphasize that these payoff definitions are purely voluntaristic. They do not follow from any mathematical construction. The unique reason to introduce them is interpreting the tensor production $\otimes$ as a multiplicative conjunction and the co-tensor production $\parr$ as multiplicative disjunction in linear logic. Hence, we may invert the payoff definitions  \ref{eq1} and \ref{eq2}, since operations $\otimes$ and $\parr$ are mutually dual as $\wedge$ and $\vee$. Thus, we introduce:
\begin{equation}\label{eq1n}k_{X\otimes Y}(x\otimes y) = k_{X}(x)\vee k_{Y}(y)
\end{equation}
\begin{equation}\label{eq2n}k_{X^{\bot}\parr Y}(x^{\bot}\parr y) = \neg k_{X}(x)\wedge k_{Y}(y).
\end{equation}
We will use these definitions in Sec. \ref{sec:3} since we interpret in the section the fulfilling of parallel processes in the tensor production as the processes' join in the corresponding lattice. We will prove that the categorical construction is also conserved for these definitions.

The larger set is connected with a position, the more advantageous it is. We suppose the existence of a universal set containing all the others. Thus, all such estimation sets form a complete lattice.
\begin{definition}
We call a strategy $\sigma$ on a Conway's payoff game \emph{X} with position estimations in a lattice as \emph{winning} if every play $s : x \mapsto y$ in the strategy ends in a position $y$ of payoff in the lattice which is different from 0.
\end{definition}
Let us note here, that linear implication $X \multimap Z$ means the process (game) $X$ consuming and the process (game) $Z$ obtaining \cite{jerar1987}. Therefore, the payoff $k_{Z}(z)$ may not be 0 in the end position of the winning strategy $\sigma$ of the game $X \multimap Z$, because we must not obtain a process (game) $Z$ ended in 0 in the winning case. However, this restriction is unnecessary in the case of definitions (\ref{eq1n}) and (\ref{eq2n}). We show now that such defined winning strategies do compose as in \cite{resource}.
\begin{proposition}\label{th1}
The strategy $\rho\circ\sigma : X \multimap Z$  is winning when the two strategies $\sigma : X \multimap Y$ and $\rho : Y \multimap Z$ are winning.
\end{proposition}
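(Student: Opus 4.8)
The plan is to reduce the statement to a single inequality in the payoff lattice, evaluated at the terminal positions of plays, and then to extract that inequality from the Brouwer-lattice structure. First I would recall the categorical composition of strategies in the underlying category \textbf{Conw}: the composite $\rho\circ\sigma$ is obtained by a parallel interaction on the three games $X$, $Y$, $Z$ followed by hiding the shared component $Y$. Concretely, every play of $\rho\circ\sigma$ ending in a position $x\multimap z$ is the projection onto $X\multimap Z$ of an interaction play whose projection onto $X\multimap Y$ is a play $s\in\sigma$ ending in $x\multimap y$, and whose projection onto $Y\multimap Z$ is a play $t\in\rho$ ending in $y\multimap z$, for one and the same intermediate position $y$. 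This decomposition is exactly the part of the construction asserted to be ``conserved'' for the lattice-valued payoff, so I would take it as given and treat it as the routine, purely game-theoretic step.

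Once the decomposition is in place, the winning property reduces to a statement about terminal payoffs. Writing $a=k_{X}(x)$, $b=k_{Y}(y)$, $c=k_{Z}(z)$, the hypotheses that $\sigma$ and $\rho$ are winning say precisely that $k_{X\multimap Y}(x\multimap y)\neq 0$ and $k_{Y\multimap Z}(y\multimap z)\neq 0$, while the conclusion is $k_{X\multimap Z}(x\multimap z)\neq 0$. Under convention (\ref{eq3}) this is the transitivity-of-implication problem
\[
(a\Rightarrow b)\neq 0 \ \text{ and }\ (b\Rightarrow c)\neq 0 \ \Longrightarrow\ (a\Rightarrow c)\neq 0 ,
\]
and under the dual convention (\ref{eq2n}) it is the analogous problem for $\neg a\wedge b$, $\neg b\wedge c$, $\neg a\wedge c$. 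The main tool I would invoke is the Brouwer-lattice composition law $(a\Rightarrow b)\wedge(b\Rightarrow c)\leqslant(a\Rightarrow c)$, together with the pseudo-complement identity $b\wedge\neg b=0$, both available because implication and $\neg$ are internal to the lattice and the distributive laws hold there.

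Here is where I expect the real difficulty. The composition law yields only a lower bound by the meet $(a\Rightarrow b)\wedge(b\Rightarrow c)$, and in a general Brouwer lattice the meet of two non-zero elements may be $0$; hence ``non-zero $\wedge$ non-zero'' cannot simply be asserted, and a naive pointwise argument breaks down. The crux of the corrected proof must therefore be a non-degeneracy step that prevents the composite payoff from collapsing to $0$. I would try to extract it from the shared position $y$: each hypothesis constrains $b$ (for instance, under (\ref{eq2n}) the inequality $\neg a\wedge b\neq 0$ forces $b\neq 0$, while $\neg b\wedge c\neq 0$ forces $\neg b\neq 0$), and I would play these constraints against $b\wedge\neg b=0$ and against the reading of $\multimap$ as ``consuming $X$ / obtaining $Z$'' to pin down the admissible values of $b$ produced by an interaction of two winning strategies. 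I expect that making this step rigorous is exactly what the definitions (\ref{eq1n})--(\ref{eq2n}) are engineered to allow, and that it may rest either on comparability of the relevant payoffs along the interaction or on a mild condition on the estimation lattice; isolating the precise hypothesis under which the composite meet stays non-zero is, in my view, the whole content of the proposition, the game-theoretic decomposition being otherwise formal.
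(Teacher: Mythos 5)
You have correctly identified the obstruction to the naive argument --- in a general Brouwer lattice the meet $(a\Rightarrow b)\wedge(b\Rightarrow c)$ of two non-zero elements may be $0$, so ``transitivity of non-zero implication'' is not available --- but your proposal stops exactly there: you announce that some non-degeneracy hypothesis must be isolated and never produce it, so no proof is actually given. The paper closes the gap by a device you gesture at (the ``consuming $X$ / obtaining $Z$'' reading) but do not exploit. Immediately before the proposition it strengthens the winning condition on an implication game: a winning play of $X\multimap Z$ is required to end in a position where, in addition to $k_{X\multimap Z}(x\multimap z)\neq 0$, one has $k_{Z}(z)\neq 0$, because one must not ``obtain'' a game $Z$ that ends in $0$. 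With this convention the hypothesis that $\rho:Y\multimap Z$ is winning already hands you $k_{Z}(z)>0$ at the relevant end position, and the conclusion follows from the single inequality $k_{X}(x)\Rightarrow k_{Z}(z)\geqslant k_{Z}(z)>0$, valid in any Brouwer lattice because $d=k_{Z}(z)$ itself satisfies the defining condition for the implication, which is therefore at least $k_{Z}(z)$. No meet of two implication payoffs is ever formed, and the intermediate value $k_{Y}(y)$ plays no role in the estimate; the analogous one-line bound $\neg k_{X}(x)\vee k_{Z}(z)\geqslant k_{Z}(z)>0$ disposes of convention (\ref{eq2}).

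For convention (\ref{eq2n}) your worry is, in fact, not dispelled by the paper either: there one needs $\neg k_{X}(x)\wedge k_{Z}(z)\neq 0$, while the hypotheses only yield $\neg k_{X}(x)\neq 0$ and $k_{Z}(z)\neq 0$ separately, and the paper declares the case ``evident'' without excluding the possibility that this meet vanishes --- precisely the failure mode you describe. So your diagnosis is sharper than the paper's own treatment of that case. Nevertheless, as a proof of the proposition your text is incomplete in all three cases: you never state the strengthened winning condition that makes (\ref{eq3}) and (\ref{eq2}) immediate, you frame the problem around the composition law $(a\Rightarrow b)\wedge(b\Rightarrow c)\leqslant(a\Rightarrow c)$, which the paper does not use at all, and you leave the decisive step as an open question rather than resolving it.
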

\begin{proof}
(for (\ref{eq3})): It is known that strategies do compose \cite{resource}. Thus, it is sufficient to check the winning condition. We should observe that the composition of two winning positions is winning:

$\begin{aligned}k_{X}(x)\Rightarrow k_{Y}(y)>0,\; and\; k_{Y}(y)>0, \;i.e.,\; x\multimap y \;is\; winning;\\
                     k_{Y}(y)\Rightarrow k_{Z}(z)>0,\; and \; k_{Z}(z)>0, \;i.e.,\; y\multimap z \;is \;winning;\\
                     implies\; k_{X}(x)\Rightarrow k_{Z}(z)>0, \;i.e.,\; x\multimap z \;is\; winning.\end{aligned}$

\noindent However, it is evident, since $k_{X}(x)\Rightarrow k_{Z}(z)\geqslant k_{Z}(z)>0$.




(for (\ref{eq2})): It is evident that in the case of $k_{X\multimap Y}(x \multimap y) \equiv k_{X^{\bot}\parr Y}(x^{\bot}\parr y) = \neg k_{X}(x)\vee k_{Y}(y)$ and $k_{Y}(y) > 0$ winning strategies do compose.

(for (\ref{eq2n})): It is evident that in the case of the winning strategy with $k_{X\multimap Y}(x \multimap y) \equiv k_{X^{\bot}\parr Y}(x^{\bot}\parr y) = \neg k_{X}(x)\wedge k_{Y}(y)$, $\neg k_{X}(x) > 0$ and $k_{Y}(y) > 0$, thus winning strategies do compose.
\end{proof}
\begin{definition}
Let us, \textbf{SetPayoff} is a category whose objects are Conway's payoff games, in which position weights take values in a lattice, and morphisms $X \rightarrow Y$ are winning strategies in $X \multimap Y$.
\end{definition}
\begin{proposition}\label{th2}
The category \textbf{SetPayoff} is symmetric monoidal closed.
\end{proposition}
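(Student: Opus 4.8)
The plan is to transport the symmetric monoidal closed structure that the underlying category \textbf{Conw} already carries (recalled above) up to \textbf{SetPayoff}, verifying at each stage only that the extra payoff data is respected. Concretely, I would reuse the same tensor $\otimes$, unit, associator, unitors, symmetry, and internal hom $\multimap$ as in \textbf{Conw}, taking the unit to be the one-position game weighted by the top element $1$ of the lattice. The forgetful functor erasing the weights, from \textbf{SetPayoff} to \textbf{Conw}, is faithful: a winning strategy is in particular a strategy, and two winning strategies equal as strategies are equal in \textbf{SetPayoff}. Consequently every coherence axiom — the pentagon, the triangle, the two hexagons, and all naturality squares — holds in \textbf{SetPayoff} as soon as the maps occurring in it are shown to be morphisms there, because each identity already holds after forgetting the weights. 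This reduces the whole proposition to three points: (i) $\otimes$ is a bifunctor on \textbf{SetPayoff}; (ii) the structural isomorphisms are winning; and (iii) the currying bijection between $(X\otimes Y)\multimap Z$ and $X\multimap(Y\multimap Z)$ carries winning strategies to winning strategies.

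Points (ii) and (iii) I expect to be routine. Each structural isomorphism is the copycat strategy inherited from \textbf{Conw}, whose end positions pair every $x$ with a copy of itself, so the relevant payoff has the form $k(x)\Rightarrow k(x)=1\neq 0$ and the strategy is winning. For (iii) the two games are already isomorphic in \textbf{Conw}, and with (\ref{eq1}) and (\ref{eq3}) the payoffs at matching positions agree \emph{as lattice elements}:
\[
(k_{X}(x)\wedge k_{Y}(y))\Rightarrow k_{Z}(z)=k_{X}(x)\Rightarrow(k_{Y}(y)\Rightarrow k_{Z}(z)),
\]
which is the residuation law $(a\wedge b)\Rightarrow c=a\Rightarrow(b\Rightarrow c)$ valid in any Brouwer lattice. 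Hence a position is nonzero on one side exactly when it is on the other, and the \textbf{Conw}-bijection restricts to one between winning strategies, natural in $X,Y,Z$. For the dual conventions (\ref{eq1n}) and (\ref{eq2n}) the same computation goes through with the residuation law replaced by the De Morgan identity $\neg(a\vee b)=\neg a\wedge\neg b$ and the associativity of $\wedge$, both of which hold in Brouwer lattices (one must avoid the Boolean law $\neg\neg a=a$, which is not available).

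The hard part is point (i): showing that if $\sigma:X\multimap X'$ and $\tau:Y\multimap Y'$ are winning then so is $\sigma\otimes\tau:X\otimes Y\multimap X'\otimes Y'$. A maximal play of $\sigma\otimes\tau$ projects to maximal plays of $\sigma$ and of $\tau$, so its end position is an arbitrary pair of end positions of the two factors, and the best lower bound one obtains from residuation is $(a\Rightarrow a')\wedge(b\Rightarrow b')\leqslant(a\wedge b)\Rightarrow(a'\wedge b')$, writing $a=k_{X}(x)$, and so on. Here is the obstruction: although $a\Rightarrow a'>0$ and $b\Rightarrow b'>0$ by hypothesis, their meet can collapse to $0$ in a lattice that is not a chain, since in a general Brouwer lattice the meet of two nonzero elements may vanish — and for the intended model, where the weights are subsets of a universal set and $\wedge$ is intersection, two nonempty sets can be disjoint. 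Thus I do not expect $\otimes$ to be a bifunctor on winning strategies for an arbitrary Brouwer lattice; one can in fact already exhibit end positions in a four-element Boolean lattice for which $\sigma\otimes\tau$ fails the winning condition. To close the argument I would therefore either restrict to lattices in which nonzero elements have nonzero meet — in particular to the linearly ordered case with $\wedge=\min$, which is exactly the numeric situation of \cite{resource} and where the difficulty disappears — or strengthen the winning condition so that it is stable under the meets produced by the tensor; pinning down the right such hypothesis is, in my view, the crux of the proposition.
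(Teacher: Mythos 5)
Your treatment of the closedness axiom coincides with the paper's entire proof: the paper reduces the proposition to the fact that the underlying category \textbf{Conw} is symmetric monoidal closed and then checks only the position-wise identity $(k_{X}(x)\wedge k_{Y}(y))\Rightarrow k_{Z}(z)=k_{X}(x)\Rightarrow(k_{Y}(y)\Rightarrow k_{Z}(z))$ and its analogues for the other payoff conventions --- exactly the residuation and De Morgan computations of your point (iii). Your points (ii) and (iii) are therefore the same argument as the paper's, carried out somewhat more carefully.

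The substantive difference is your point (i), and you are right that it is not routine: the paper's proof never verifies that $\otimes$ is a bifunctor on \textbf{SetPayoff}, i.e., that the tensor of two winning strategies is winning, and the obstruction you describe is genuine. Take the Brouwer lattice $\{0,a,b,1\}$ with $a\wedge b=0$, let $X$ and $Y$ be trivial games of payoff $1$, and let $X'$ and $Y'$ have terminal positions of payoffs $a$ and $b$ reached by one Opponent and one Proponent move; then $\sigma : X\multimap X'$ and $\tau : Y\multimap Y'$ are winning, while the maximal plays of $\sigma\otimes\tau$ end in a position of payoff $(1\wedge 1)\Rightarrow(a\wedge b)=0$ under (\ref{eq1}) and (\ref{eq3}), and the analogous failure occurs under (\ref{eq1n}) and (\ref{eq2n}) because $\neg k_{X}(x)\wedge\neg k_{Y}(y)$ may vanish. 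So the proposition needs a hypothesis that neither your proposal nor the paper supplies. Of your two suggested repairs, note that restricting to linearly ordered lattices excludes the paper's intended model, in which payoffs are sets of geons and disjoint nonempty sets occur; a winning condition that is imposed componentwise (and is therefore stable under $\otimes$) is the repair that would actually rescue the construction. In short: where your argument overlaps with the paper's it is the same argument, and where it stops, it stops at precisely the step the paper's proof silently and, as your counterexample shows, incorrectly takes for granted.
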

\begin{proof}
(for (\ref{eq1}) and (\ref{eq3})): The category of Conway games is symmetric monoidal close \cite{resource}. Therefore, it is sufficient to check if $(k_{X}(x)\wedge k_{Y}(y))\Rightarrow k_{Z}(z) = k_{X}(x)\Rightarrow (k_{Y}(y)\Rightarrow k_{Z}(z))$ for all positions. But this formula is valid in Heyting algebras (i.e., in Brouwer lattices).

(for (\ref{eq1}) and (\ref{eq2})): $\neg(k_{X}(x)\wedge k_{Y}(y))\vee k_{Z}(z) = \neg k_{X}(x)\vee (\neg(k_{Y}(y)\vee k_{Z}(z))$ in Brouwer lattices.

(for (\ref{eq1n}) and (\ref{eq2n})): $\neg(k_{X}(x)\vee k_{Y}(y))\wedge k_{Z}(z) = \neg k_{X}(x)\wedge (\neg(k_{Y}(y)\wedge k_{Z}(z))$ in Brouwer lattices.
\end{proof}

Thus, the symmetric monoidal closed categorical construction for Conway's payoff games from \cite{resource}, is conserved for lattice payoffs.

\subsection{Biederman Geons}\label{subsec:3}

Irving Biederman has proposed in \cite{bieder} a theory of human image understanding in which an image is segmented into a set of geometric primitives, such as blocks, cylinders, wedges, and cones. The collection of the components, called geons, is rather limited ($N\leqslant 36$) and can be derived from easily detectable properties of edges: curvature, collinearity, parallelism, and convergence. The detection of these properties is invariant over viewing position and image quality, and thus allows perception in different object positions and in the case of noised image. The experiments of \cite{bieder} showed low errors in object naming of such geon schemes of different objects. Some of such simple geon schemes are represented in Fig. \ref{fig:1.5}:
\begin{figure}\begin{center}
  \includegraphics[scale=0.4]{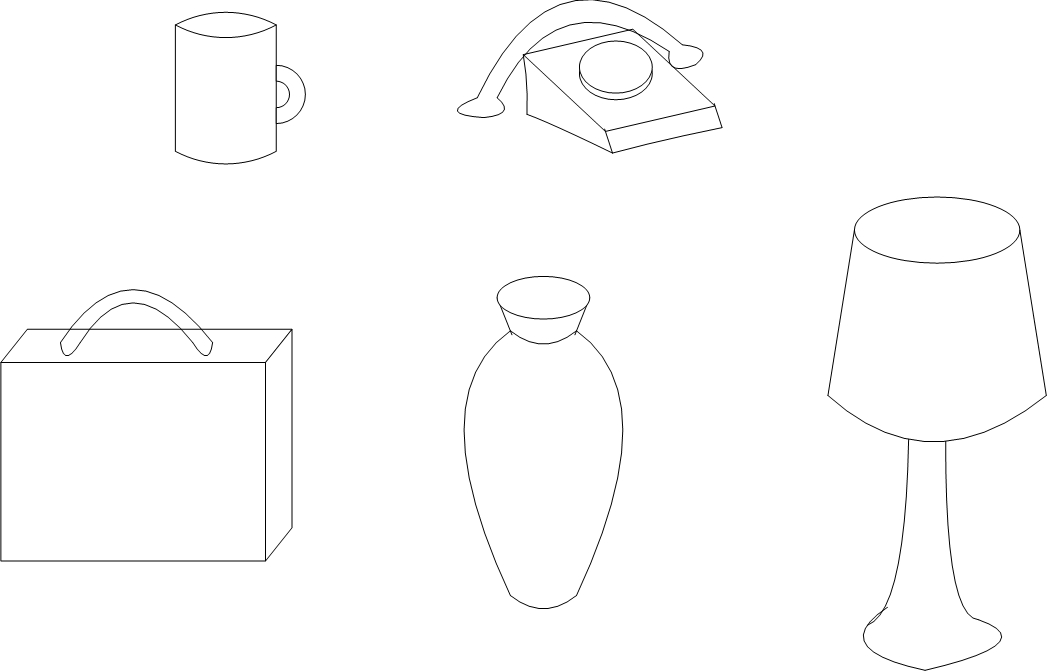}
\end{center}
\caption{Examples of geon schemes of simple objects.}
\label{fig:1.5}       
\end{figure}
The approach is similar to speech perception: in both cases, we can code tens of thousands of objects by mapping the input onto a limited number of primitives and then using a representative system to build free combinations of these primitives. However, this method does not allow us to distinguish images inside one object type. To do this, we should use other methods.

\section{System Behaviour Description}\label{sec:3}
We consider the cognition process as a game in which a system investigates an environment, i.e., the system obtains information about the environment objects in the form of geon sets.
\begin{Th}It is supposed that the system investigates the environment visible up to some horizon in each direction  (as if in a fog: the objects visible best are nearer to the system, Fig.~\ref{fig:2}) and builds images of the observed objects.
\end{Th}
\begin{figure}\begin{center}
  \includegraphics[scale=0.7]{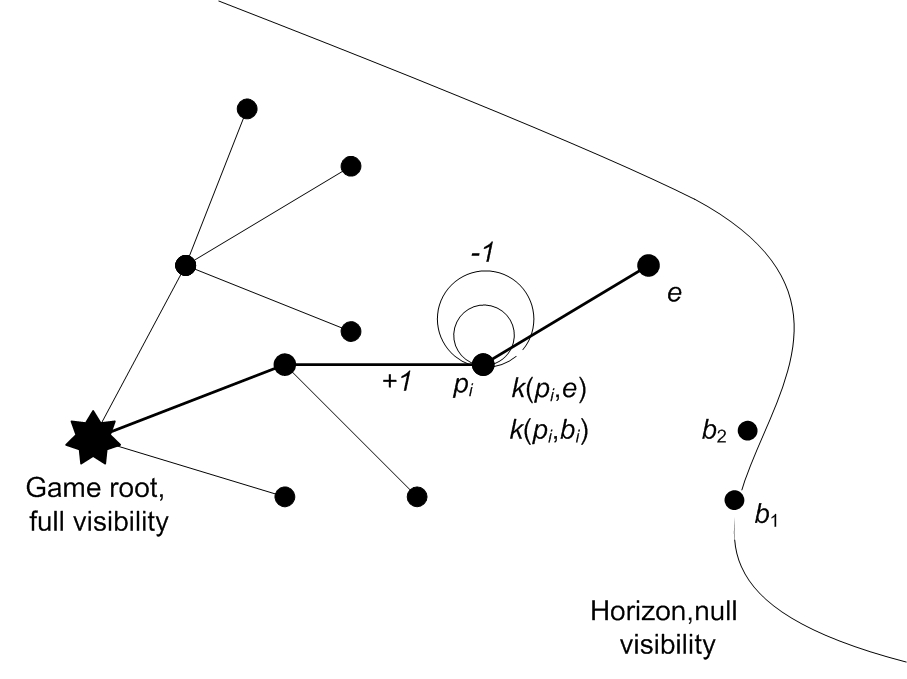}
\end{center}
\caption{An example of a game and its visibility horizon. The bold line shows the resulting play with the rewards $k(p_{i},b_{j})$ in the position $p_{i}$ of the goal $b_{j}$ achieving process.
The environment moves are depicted by circles.}
\label{fig:2}       
\end{figure}

\begin{Th}It is supposed that objects' attractive degrees guide the system to behave: it investigates things in the environment which have attracted its attention, and builds their images.
\end{Th}
The theory of attention is discussed, e.g., in  \cite{Cog_Sci}. ``Attention is defined as a concentrated mental activity. In general, we can think of attention as a form of mental activity or energy that is distributed among alternative information sources''\cite{Cog_Sci}. Both general classes of theories that attempt to explain attention --- bottleneck theories and capacity theories --- explain \emph{how} attention selects information sources. Still, they tell us nothing about \emph{why} exactly it makes the selection. We suppose the system has some pre-existing preferences to choose the sources. In the simplest cases, the choice can be based on the detection of areas of difference in hue, contrast, degree of distance from the observation point, etc. \cite{seku} In more complicated cases, the system can have some frames or gestalt images. Thus, we suppose the system predestination or purpose for practical use determines the attention preferences.

Then, the system should distribute the objects by the attraction degree during the investigation. Thus, some objects are more attractive, some of them are less attractive, and some cannot be compared by the attraction degree. Therefore, we get a partial order at the object set. It is supposed that the set has the bottom element, i.e., the element of null interest and the top element, i.e., the object of the greatest interest, which is the most attractive element. The latter one is the lattice join of all its elements (see Supposition \ref{sup3}).
The other elements may be represented as some joins (combinations) or meets in the case when an attractive object is some part of another thing. Therefore, the system builds a complete lattice of the environment object images, their combinations, and allotments (e.g., Fig. \ref{fig:1} for objects from Fig. \ref{fig:2}). Such a lattice may just exist; thus, the attractions of new objects may be combined with the attractions of pre-existing ones. Many simple biological systems, e.g., ants, possibly have such a preconstructed lattice of objects they can have a deal. Perhaps the partial order in the lattice may be dynamically changed for more complicated systems.

In any case, we suppose the lattice of the environment object images may be dynamically changed due to the system not seeing everything around at each moment. The system looks into a sector and builds images that lie inside it.


\begin{Th}\label{sup3}It is supposed that the preferable behaviour of the system is to achieve all its goals at a given time, i.e., to get maximum information.
\end{Th}
We identify the goals and their achievement processes. Thus, the most preferable behaviour variant corresponds to the top lattice element 1. And the bottom element 0 corresponds to complete inactivity and to the least significant behaviour variant. All the estimations may be considered as partially true truth values. Thus, we can say that the more essential the behaviour is, the truer it is. When it is not possible to determine the most crucial estimation for a behaviour due to the order being only partial, some additional methods may be used to select the optimal variant \cite{assa}.


The process of the system moves in the environment space, i.e., the system's cognition process, can be represented as a game in which the environment informs the system about (partially) visible objects at each step with the position reward that estimates the information. In the next paragraph, we will see that the rewards are sets of geometric primitives which make up the objects.

We regard the environment as the Opponent and the system as the Proponent to use the categorical construction of Sec.~\ref{subsec:3}. The Proponent moves from one position in the environment to the other by the use of the information to achieve his goals, i.e., environment objects. The more fortunate the position, i.e., the larger the reward, the more precise information the environment provides about the item in the position. Thus, the completely winning position is the last game step of the Proponent, in which it can get no more information. The system is placed initially in the configuration space (environment) in the root * of the system game \emph{A} with the system goal/object lattice $M_{s}$ partially ordered according to system attention preferences.

Game \emph{A} represents the \emph{\textbf{possible}} system's moves in the environment. But the \emph{\textbf{real}} trajectory or the play is chosen from the demand of the maximal total position reward along the projected path. The system move in the environment is estimated corresponding to such a criterion with the reward $k(p_{i},b_{j})$ in the position $p_{i}$ of the goal $b_{j}$ achieving process.

It may be that the system does not intend to achieve any goal initially and moves according to a criterion of an optimal move in a goals' absence. Thus, the system has a goal (task) $a$ of a free movement in the environment, which should be included in the lattice $M_{s}$. Hence, such a free movement should have its value. We do not discuss here the optimality criterion, since the robot or agent systems' designers may suggest such ones according to their needs in a specific environment. However, we suppose the free movement rewards in the corresponding game are also valuated by some sets. These sets are of the same nature as the sets which correspond to the rewards in the game of achieving goals. Therefore, the rewards of the free movement are some sets of geometric primitives that are placed in the environment.

Thus, we suppose the system sees different combinations of geometric primitives in the environment. The primitives which are located in one place make up an object scheme. These schemes may have joins and meets combining other such schemes (which may no longer be in one place). Hence, such primitives' sets form a lattice. The lattice is complete since it contains any joins and meets. This lattice is different from the lattice $M_{s}$ of the system goals. The lattice $M_{s}$ contains only a part of all objects, and it is partially ordered according to attention preferences. The primitives' lattice contains all visible things and their combinations, and it is ordered according to the relation of the set inclusion. We demand the primitives' lattice is Brouwer (in particular, it must be distributive); thus, it includes all that is necessary for these joins and meets even when they are not evident.

Let us \emph{n} goals $b_{1} ... b_{n}$ are discovered in the environment by the system with information about them  $k(p_{i},b_{j})$ in positions $p_{i}$ of the game \emph{A} (Fig. \ref{fig:2}). The rewards $k(p_{i},b_{j})$ take values in the primitives' lattice. Only $k$ goals from these $n$'s ones may be chosen due to limited system opportunities. Game \emph{A} corresponds to the process of achieving the free movement goal \emph{a}. Then, a winning strategy of the game $A'=A\multimap B_{1} \otimes ... \otimes B_{k}$ defines a transition (morphism) from the game \emph{A} to this new game $A'$ of moving and achieving goals $b_{1} ... b_{k}$  in the games $B_{1}  ...  B_{k}$. It is supposed that the system can achieve several goals in parallel up to the moment when only one object rests to be chosen. Thus, the game $A'$ corresponds to parallel processes of achieving those \emph{k} goals from discovered \emph{n}'s ones, which may be better achieved in the next sense.

It is reasonable to choose the trajectory (play) from the demand to maximize the reward along the path within the visibility horizon in the following way (using (\ref{eq1n}), (\ref{eq2n}) in contrast to \cite{assa}, \cite{ubs19})\footnote{We may also use formulas (\ref{eq3}) and (\ref{eq1}) for rewards: \begin{equation}k^{A\multimap B_{1} \otimes ... \otimes B_{k}}_{play} = \max_{plays}\bigcup_{play}[k^{A}\Rightarrow (k^{B_{1}}\wedge ... \wedge k^{B_{k}})]
\end{equation} But this form may be inconvenient in practice. However, the implication calculation is no more difficult \cite{maximov20}, hence, the form may also be used.}:
\begin{multline}\label{eqmain}
k^{A\multimap B_{1} \otimes ... \otimes B_{k}}_{play}(a\multimap b_{1}\otimes ... \otimes b_{k}) \equiv  k^{A^{\bot}\parr B_{1} \otimes ... \otimes B_{k}}_{play} = \\ = \max_{plays}[\bigcup_{play}\neg k^{A}(a)\wedge(k^{B_{1}}(b_{1})\vee ... \vee k^{B_{k}}(b_{k}))]
\end{multline}
Here, the reward $k^{A\multimap B_{1} \otimes ... \otimes B_{k}}_{play} = k^{A^{\bot}\parr B_{1} \otimes ... \otimes B_{k}}_{play}$ is maximized in the game $A'$ and corresponds to that process of achieving \emph{k} goals that has the greatest priority (i.e., the highest truth value)
in the system goal lattice (at the current time). Thus, these \emph{k} processes are the most important parallel ones from the viewpoint of the system goal lattice. The priority is maximal among all possible parallel processes of achieving \emph{n} discovered goals\footnote{We consider here the join of several goals in the lattice $M_{s}$ as the process of their possibly parallel achieving (like in \cite{Maximov_17}), though, in \cite{assa} a linear logic structure on the goal lattice was used in this case. We did so because it is hard making sense of lattice elements multiplication in linear logic.}.

The maximum in (\ref{eqmain}) is taken among all possible plays, and it joins the rewards along these plays (i.e., trajectories) in games $A^{\bot}$, $B_{1}$, ... and $B_{k}$. Thus, information about all objects $b_{i}$ and their system images is demanded to maximize along the resultant path up to the moment when all the images together may not be able to improve. Then, the total number of chosen objects should be decreased by the same method: the system should pick those $l$ objects with $l < k$, which parallel achieving processes have the largest estimation in the system goal lattice. And so on, up to the moment, when perhaps only one goal remains. However, some external method of the path calculation may suggest an objects' traversal path to obtain the maximum cumulative reward from as many angles as possible (see Sec. ``System Movement Discussion  and Reward Valuations'' \ref{rew_val}).

The meaning of formula (\ref{eqmain}) is that, following the semantics of linear implication, the system moves from the execution of the process of free movement $A$ to the processes of achieving goals $B_{j}$. The gain is calculated for the strategy (i.e., the path) of game $A^{\bot}\parr B$ (with simplified notation). At the same time, information about targets $k^{B_{j}}(b_{j})$ is maximized, and information about the free movement of $k^{A}(a)$ is minimized (since pseudo-complements $\neg k^{A}(a)$ are maximized). Such $k$ and $\neg k$ can be considered as arguments and counterarguments for the corresponding movement following the ideas of the JSM method of plausible inference \cite{jsm}: the more information we have about the goal, the stronger the arguments for the transition to achieving it. Also, the stronger the arguments against free movement (i.e., the less information we have about the possibility of such a movement), the stronger the arguments for moving from free movement to achieving a goal.

\section{Object Recognition}\label{sec:3.2}
Thus, we come to the main problem: how to select (recognize) objects in the environment and how to evaluate the amount of information about them, i.e., the rewards? The latter problem we solve in Sec. \ref{rew_val} ``System Movement Discussion  and Reward Valuations'', and the former one we discuss below.

We suggest decomposing an environment image to Biederman's geons (paragraph ``Biederman Geons'' \ref{subsec:3} in Sec. \ref{sec:1}). Then, during the cognition process, the system obtains such images from different camera angles, and gets a tuple of sets of geons localized in one place. These sets with their geon incidence matrixes constitute one type of environment objects. The incidence matrix for each geon pair indicates their facets which intersect (see paragraph ``Geon Recognition'' \ref{recog} below). If the system recognizes such a set in future, it will know the possible object type.

To perform the programme, we need automatic training of the neural network to be used for recognition, initially to these geons only, and, afterwards, to these geon sets.
Thus, we consider first an automatic generation of training samples in a virtual environment \cite{seku1}.

\subsection{Generation of training samples}
The presence of a high-quality training set largely determines the efficiency of machine learning algorithms. It is worth mentioning that in the preparation of the training samples attention should be paid, not only to the volume of data, but also to such things as the balance of classes and the order of their sequence. The data must contain a comparable amount of instances for each class, and must be mixed. It is advisable to include such data in the training set that is as close as possible to the conditions of further use of the neural network.

In this study, the technology of synthesis of training sets is proposed based on the use of three-dimensional graphics (OpenGL library \cite{gl}) within the developed software-algorithmic complex (Fig. \ref{fig:4}).
\begin{figure}\begin{center}
  \includegraphics[scale=0.4]{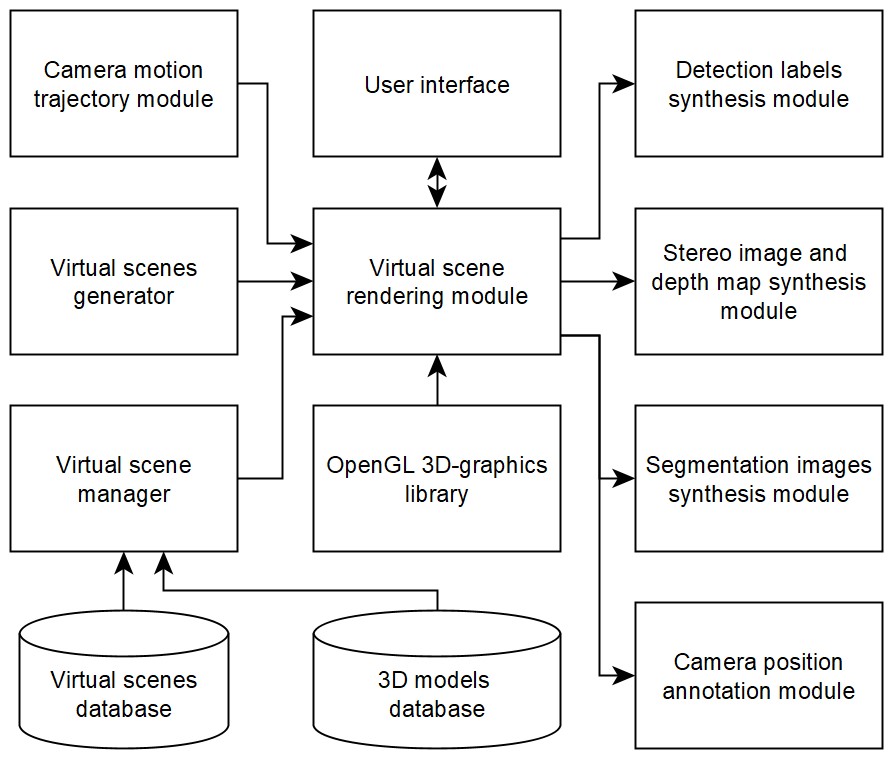}
\end{center}
\caption{The structure of the software for generation of training sets.}
\label{fig:4}       
\end{figure}

The dataset synthesis software (DSS) is based on the module for the generation of images of virtual scenes. A virtual scene is a collection of three-dimensional objects of different categories, equipped with a description of their position, orientation and colour characteristics.

\subsection{Training sets formation in problems of visual classification}
In accordance with tasks of visual image analysis, DSS allows the generation of training samples for solving the problems of visual classification, localization, segmentation and image depth evaluation. In addition, the virtual environment provides access to the exact position of the camera at successive times, which makes it possible to synthesize training samples to solve the problem of visual odometry.

There are two ways to create virtual scenes in the pre-rendering phase.

For the tasks of coarse tuning of neural network classifiers, when the mutual position of different objects is not important and, to the contrary, requires the greatest possible variety of objects moving around the scene, the approach is applied, the essence of which is as follows. The number of N objects simultaneously observed in the scene is manually set or randomly selected. A list of random positions for these objects is formed:
\begin{equation*}
P = \{p_{1}, ..., p_{N}\}.
\end{equation*}
The situations of mutual penetration of objects are eliminated on the basis of the method of potential fields:
\begin{equation*}
p'_{i} = p_{i} = min(d_{max}, \sum_{j=1,j\neq i}^{N}\eta/(p_{j} - p_{i})^{2},
\end{equation*}
where $p'_{i} = \{x', y', z'\}$  is an updated object position; $d_{max}$  is a maximum shift; $\eta$ is the objects attraction coefficient.

For the tasks of configuring neural network classifiers to solve specific application problems, an approach based on the loading of a pre-prepared virtual scenes is used. In such a case, the variety of training examples is achieved, not by changing the position of objects in the scene, but by changing the angle of observation during the movement of the camera along the specified trajectory.
At the first stage of solving a specific application problem for setting up a visual classifier, software is used that allows the formation of descriptions of virtual scenes in the following form:
\begin{equation*}
W = \{o_{1}, ..., o_{N}\};
\end{equation*}
where $o_{i}$  is a programme structure, encapsulating the object's position, orientation, type and specifics of its visual appearance. Compatibility of virtual scenes storage format is ensured to provide their correct loading in the virtual environment.

In the second stage, automatically generated scenes are loaded into the DSS: the text descriptions of the scenes are interpreted and the corresponding software representations for the objects listed in the scene file are formed. Next, the reliable results of visual analysis are determined on the basis of a direct access to the properties of the loaded programme structures. Annotations containing the desired results of the analysis of the type and position of objects are saved together with the images in a directory on the disk for further configuration of neural networks.
Note that the first stage can be performed either with the use of ready-made three-dimensional models as the elements of the set, or with the use of models synthesized procedurally. The latter approach is preferable due to the unlimited possibilities for altering the parameters and as a result achieving variability in the appearance of these objects.

\subsection{Procedural model generation}
Procedurally generated objects are specified as a set of parameters, which, on the one hand, is more compact compared to the explicit enumeration of constituting geometric primitives in the three-dimensional model file, and on the other hand, allows the appearance of the object to diversify for the formation of training sets of high quality.
In general, the generated object is specified by the formula
\begin{equation}\label{eqsec3}
o = \{l, p_{1}..., p_{K}\},
\end{equation}
where $l$  is object class, $p_{i}$  are parameters are determined by the expert which affect the geometric shape of the object through the relations embedded in the corresponding analytical model.
In Fig. \ref{fig:5} examples of procedurally synthesized objects for modelling both indoor and outdoor areas are presented.
\begin{figure}\begin{center}
  \includegraphics[scale=1.0]{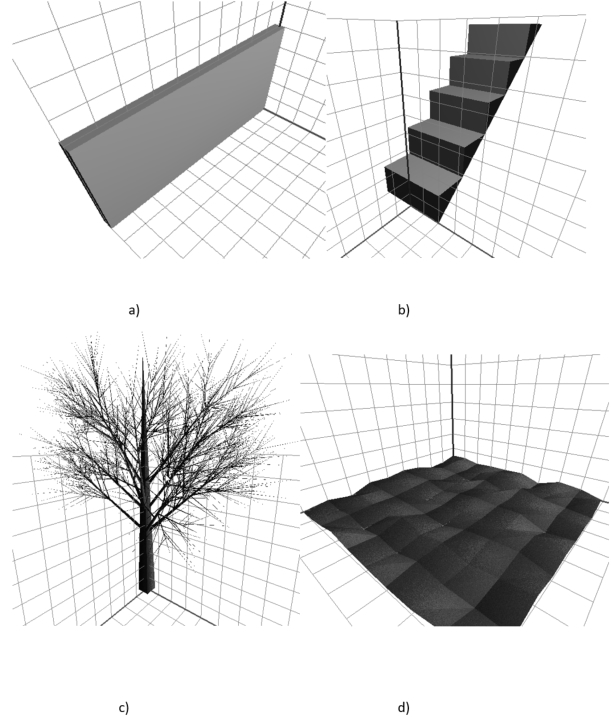}
\end{center}
\caption{Examples of automatically generated objects:
a) wall; b) stairs; c) tree; d) a fragment of the landscape.}
\label{fig:5}       
\end{figure}
For example, to generate an object of type "Tree", the set (\ref{eqsec3}) takes the form:
\begin{equation*}
o = \{``Tree'', h, k, n_{b}, n_{l}\},
\end{equation*}
where $h$  is the tree trunk height, $k$  is a thickness coefficient, $n_{b}$  is a number of branches at every level of the tree, $n_{l}$  is a number of levels of recursive branching.

Each branch is formed as a typical four-sided pyramid, the displacement and rotation of which are set randomly within the limits allowed by the parent object (trunk or branch), and the length is selected in proportion to the distance from the positioning point of the branch to the final vertex of the parent object. This approach allows a wide class of objects similar to real trees of different types to generate. The same can be said about the other objects presented in Fig. \ref{fig:5}.

The described possibility of procedural generation of objects can be used not only for the formation of training sets in the classification of visual images, but also when setting up vision systems that solve the problem of avoiding obstacles. The presence of comprehensive information about the geometry of the synthesized object enables optimal trajectory planning in the nearby space. Matching the resulting motion plan with the partial visual information available for the moving camera allows the formation of a dataset with samples capturing implicit relationships in choosing adequate robot motion for different observed situations in the external environment.

An additional increase in the number of training examples can be obtained using well-known methods of augmentation: shifting, rotating, reflecting, scaling, noising, blurring images, as well as a relatively new method of neural network data augmentation \cite{aug}.

\subsection{Geon Recognition}\label{recog}
Now, we can flesh out the formula (\ref{eqsec3}) for geons:
\begin{equation}
geon_{i} = \{id_{i}, \{(facet_{j}, line_{j}, deformed, curved)\}\},
\end{equation}
where ``line'' is a linear parameter of the $j$'s facet and takes values $\{long, short\}$, ``curved'' is its curvature parameter and takes values $\{yes, no\}$,  and ``deformed'' parameter takes values $\{no, bloat, depressed\}$. The facet set is:\\ $\{top, bottom, front, back, left, right\}$ with the corresponding geometrical form for each value. All geons are modifications of a cylinder and a brick. Accordingly, each brick is a combination of rectangles (and trapeziums up to triangles) that have a long and a short size. The cylinder also has such an aspect ratio. The ``line'' parameter indicates this characteristic. Additionally, geons may be parameterized with scaling factors, e.g., $s_{x}=[0.5 ... 2], s_{y}=[0.5 ... 2], s_{z}=[0.5 ... 2]$, and surface noise $F = [0...1]$, which allow us to diversify their visual appearance during the generation of the training images. We have used such a diversification in this paper.

Then, we get the following values for the incidence matrix of 
a geon pair:
\begin{equation}
M_{i,j} = \left\{\begin{smallmatrix}\{zone_{j}\},\; geon_{i}\bigcap geon_{j} \neq \emptyset;\\
          0,\; geon_{i}\bigcap geon_{j} = \emptyset,
          \end{smallmatrix}
          \right.
\end{equation}
where the variable ``$zone_{j}$'' refers to $j$'s geon and takes the next values:
\begin{equation}
zone_{j} \in \{top, bottom, left, right, front, back\}_{j}.
\end{equation}
The set of zones denotes those facets in $j$'s geon which intersect with $i$'s geon. We get corresponding $i$'s geon facets in the $M_{j,i}$ element. Thus, we can reconstruct the whole geon scheme of the object in a certain foreshortening by its incidence matrix.

The incidence matrix and zone values may be obtained with fuzzy operations (especially, when the objects intersect by two or more zones) from the objects' bounding box coordinates which are given by the YOLO neural network used.
Naturally, this is rather a rough object classification, and we use it as the first simple variant.

At the first stage, these matrices can be storage in the system's (robot's) memory in order to compare them with the matrices of observed objects. Subsequently, we would like to build a 3D object by a corresponding incident matrix set and to train a network for such objects as we do in this paper for geons. Such a 3D object could be supplemented by a graph of geon connections to each other.

In this paper, as a first step, we have used the method of previous sections, obtained a geon data set\footnote{We use a set of 16 main geons for our limited calculation capabilities.}, and trained a YOLO neural network on them with the resources \cite{nels}, \cite{nelsj}. At the network input, we have a raster RGB image, presented as an array of pixel brightness with a resolution of 416 * 416.
At the output of the network, we obtain 169 cells, each of which contains $N$ values:
$N =$ (4 coordinates of the frame $+ 1$ estimate of the presence of an object $+ 16$ geon classes) * 3 proportions of the frames $= 63$ values.
Total: $169 * 63 = 10647$ values.

Fig. \ref{fig:6} --- Fig. \ref{fig:11} show examples of an image decomposing on geons after the network training.

\begin{figure}\begin{center}
  \includegraphics[scale=0.6]{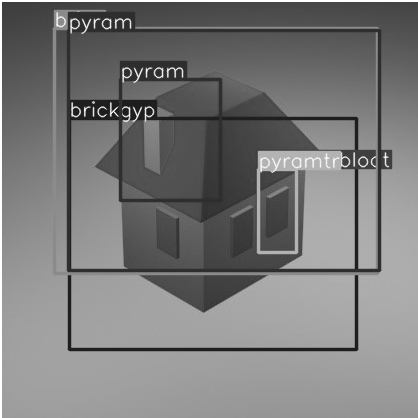}
  \includegraphics[scale=0.6]{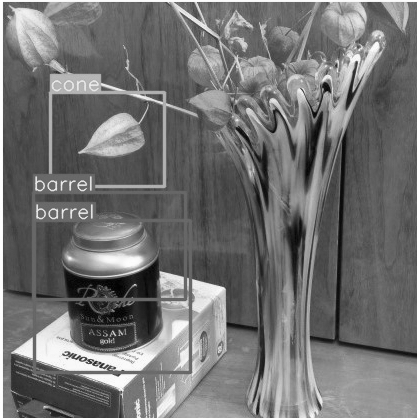}
  \includegraphics[scale=0.6]{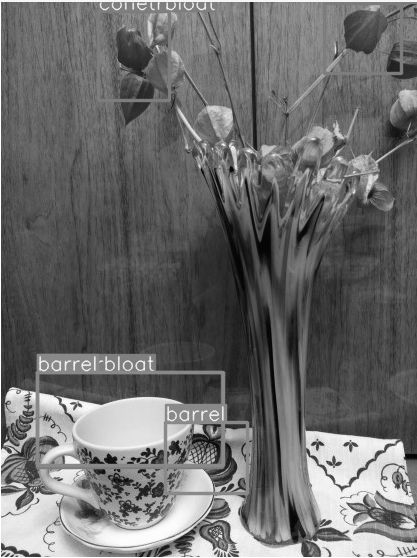}
\end{center}
\caption{An example of a simple image decomposition in the case of automatically generated
 images in the training set.}
\label{fig:6}       
\end{figure}
\begin{figure}\begin{center}
  \includegraphics[scale=0.6]{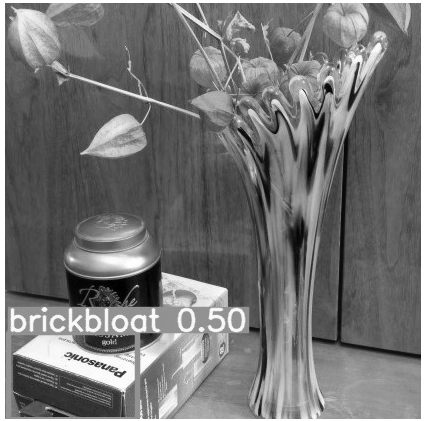}
  \includegraphics[scale=0.6]{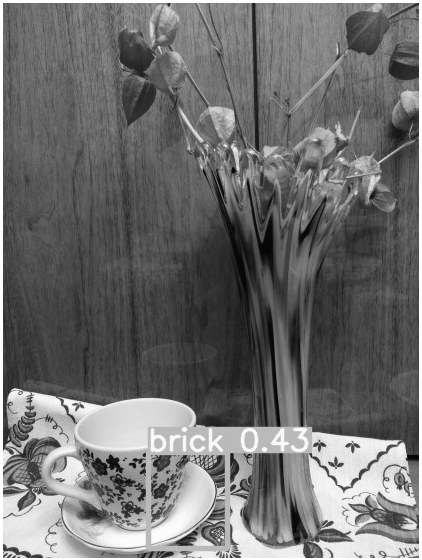}
\end{center}
\caption{An example of a simple image decomposition in the case of
manually marked up images in the training set.}
\label{fig:7}       
\end{figure}
\begin{figure}\begin{center}
  \includegraphics[scale=0.6]{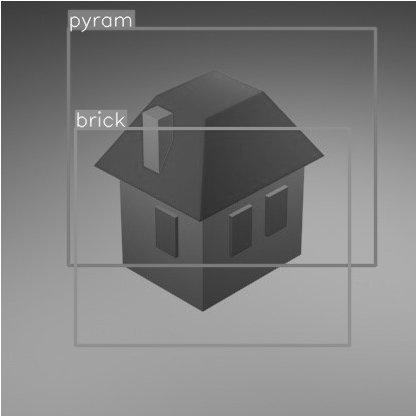}
  \includegraphics[scale=0.6]{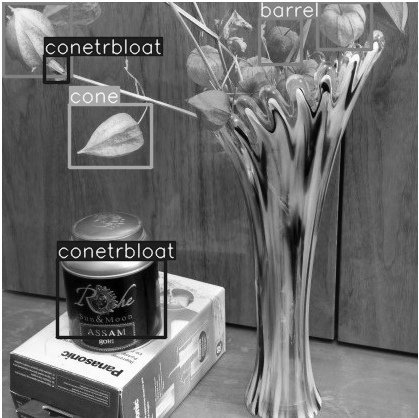}
  \includegraphics[scale=0.6]{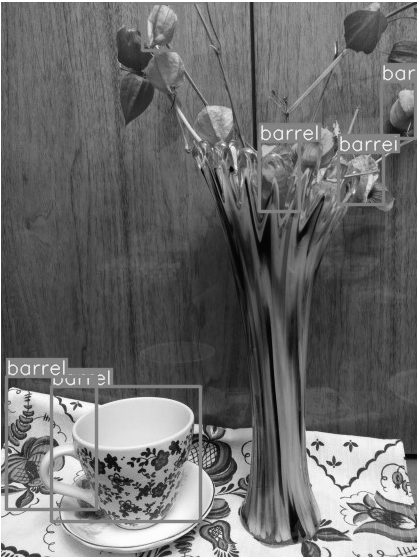}
\end{center}
\caption{An example of a simple image decomposition in the case of automatically generated
 and manually marked up images in the training set.}
\label{fig:8}       
\end{figure}
\begin{figure}\begin{center}
  \includegraphics[scale=0.6]{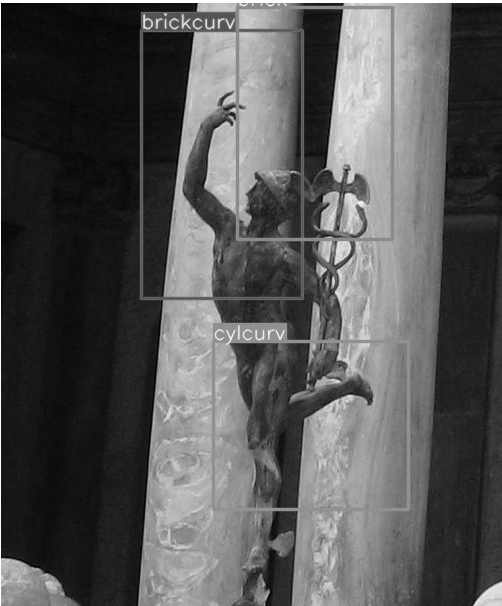}
  \includegraphics[scale=0.7]{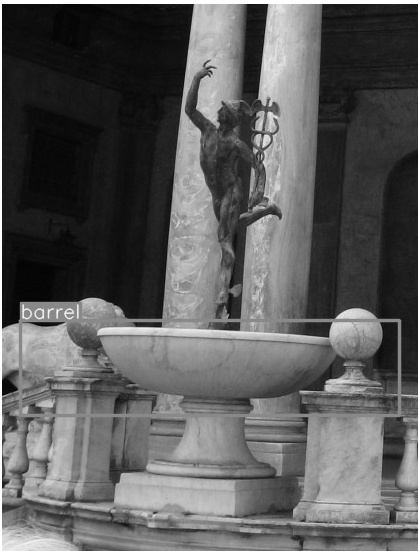}
  \includegraphics[scale=0.7]{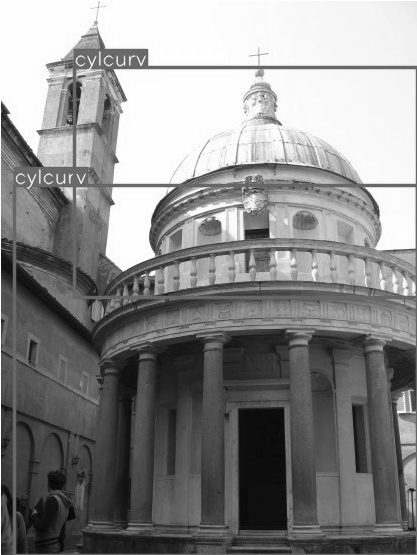}
  \includegraphics[scale=0.7]{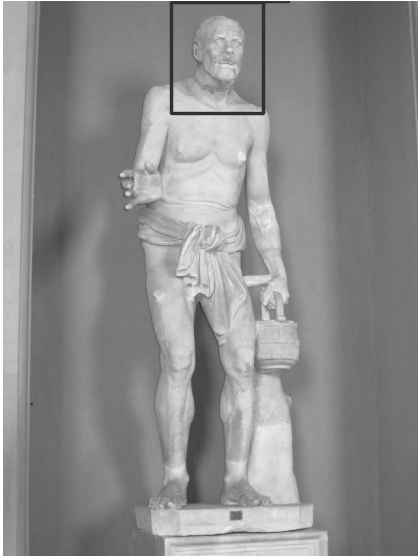}
\end{center}
\caption{An example of a complicated image decomposition in the case of automatically generated
 images in the training set.}
\label{fig:9}       
\end{figure}
\begin{figure}\begin{center}
  \includegraphics[scale=0.5]{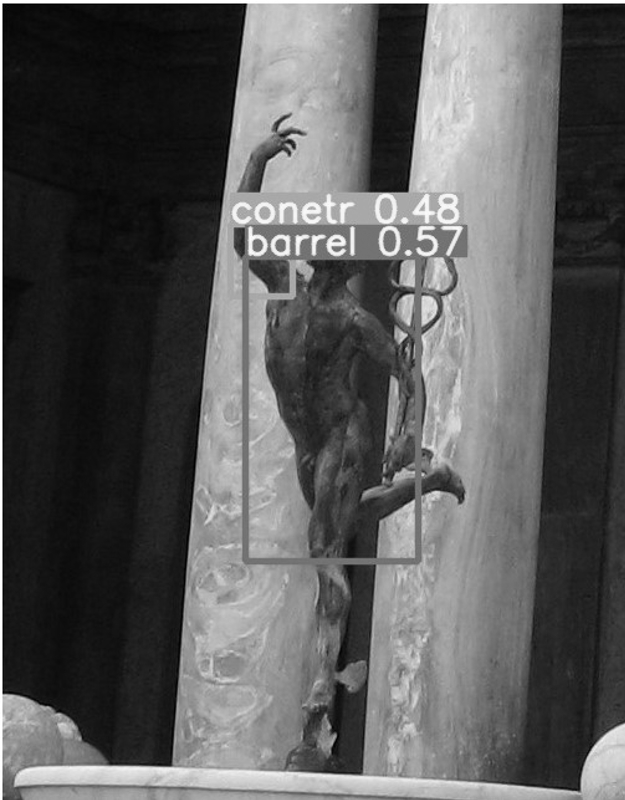}
  \includegraphics[scale=0.5]{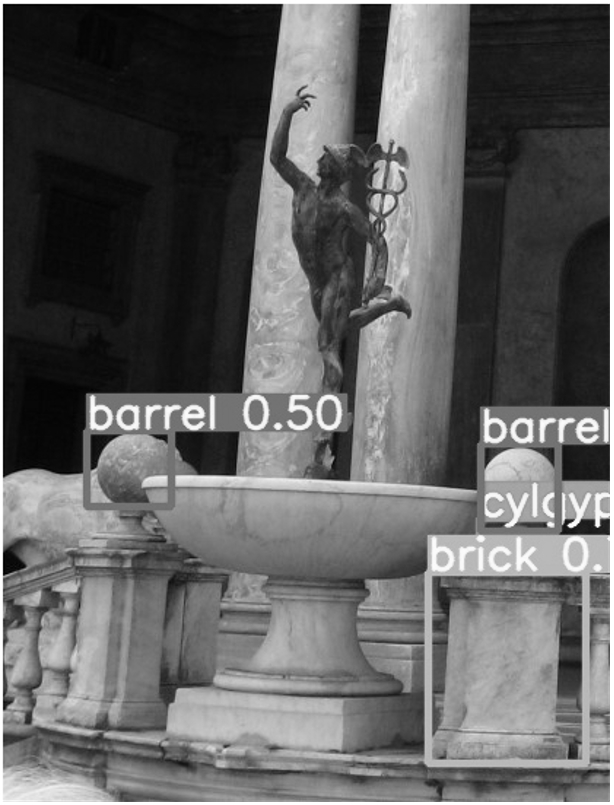}
  \includegraphics[scale=0.7]{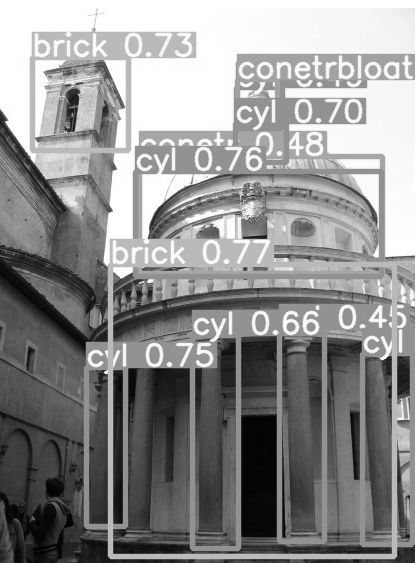}
  \includegraphics[scale=0.7]{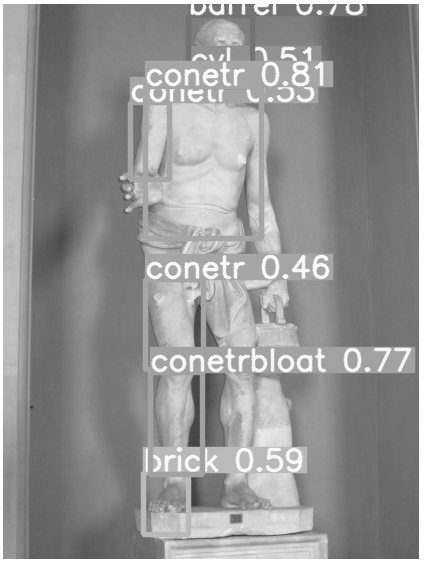}
\end{center}
\caption{An example of a complicated image decomposition in the case of
manually marked up images in the training set.}
\label{fig:10}       
\end{figure}
\begin{figure}\begin{center}
   \includegraphics[scale=0.6]{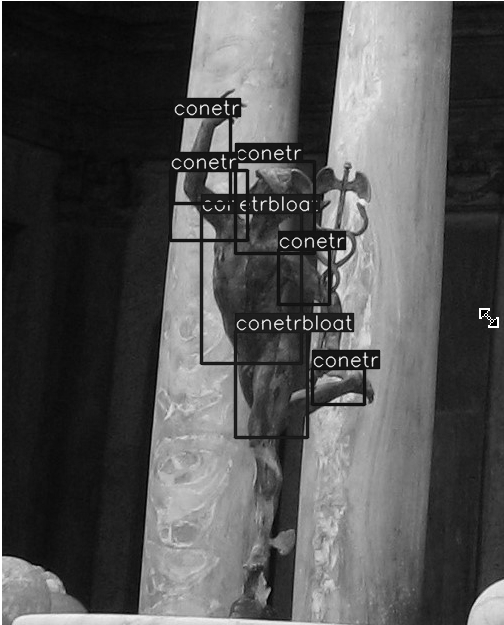}
  \includegraphics[scale=0.7]{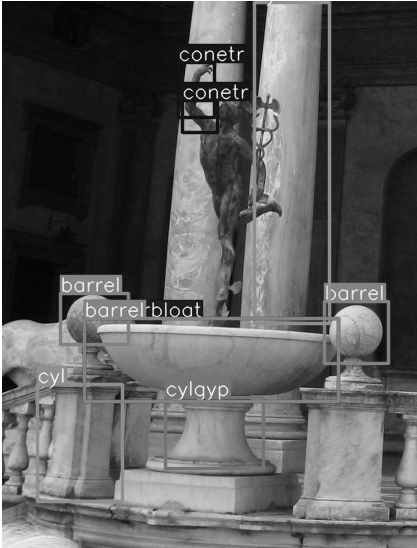}
  \includegraphics[scale=0.7]{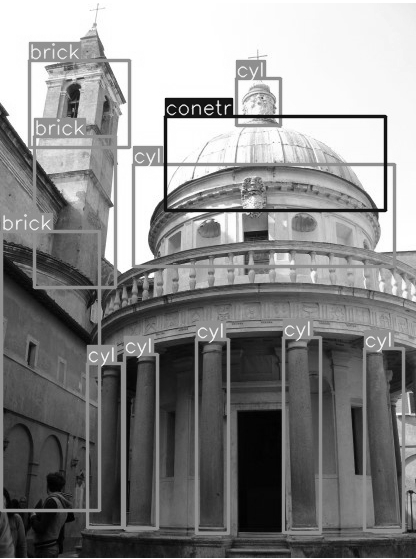}
  \includegraphics[scale=0.7]{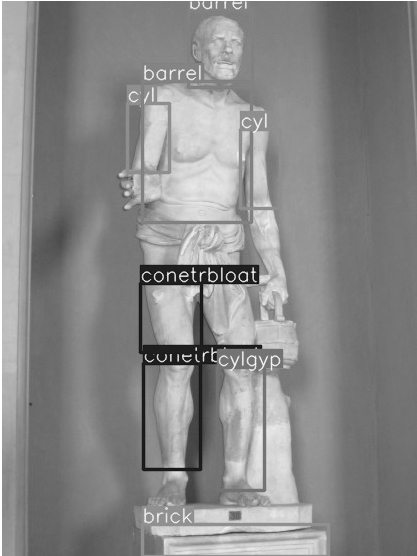}
\end{center}
\caption{An example of a complicated image decomposition in the case of automatically generated
 and manually marked up images in the training set.}
\label{fig:11}       
\end{figure}
Here, you can see that only an automatically generated image set is not enough for good image recognition under limited training: we use three series of 150 images with 10, 20 and 30 geons respectively, with 10x augmentation (resource \cite{alb} was used). The manually marked up image set is also not good, and only the combination of automatically generated images with the hand-marked ones gives an acceptable result\footnote{Authors are grateful to K. Rusakov for this advice} --- geon recognition becomes more exact and complete. We use here about 150 hand-marked images with the same augmentation\footnote{An image is not presented in these image sets if it was not recognized in the corresponding case}.

\section{System Movement Discussion  and Reward Valuations}\label{rew_val}
Formula \ref{eqmain} is beautiful since it gives a mathematical justification for generally obvious behaviour. However, it is necessary to clarify some points to use the formula in practice. Obviously, the system should move in such a manner as to increase all the rewards $\neg k^{A}(a)$ and $k^{B_{i}}(b_{j})$ of possible number of maximal goals'. Thus, what amounts should we take as the objects' rewards? Clearly, the number of visible objects' geons may not increase as the system gets closer to the object. This number may even decrease with the foreshortening change. However, the information amount increases since we recognize the object more confidently and from different sides.

Hence, we use the set $\cup \{b_{j}\}$ of geons in geon schemes of goals to be achieved as the anti-reward $\neg k^{A}(a)$ for the free move. The larger the set, the higher the reward of the non-free move and the lower the reward of the free move. For $k^{B_{i}}(b_{j})$ in the process of achieving goals, we use successively two reward types :

--- In the first type, the system approaches the object and sees it better and better. At this stage, we use the same anti-reward as in the case of a known object: the anti-reward is the set of possible recognition variants. When the system sees all visible geons $\{b_{j}\}$ of the thing $b_{j}$ precisely, without any hesitation, then the anti-reward is minimal, and the reward $k(p_{i},b_{j})$ in position $p_{i}$ is maximal. Thus, the system should move in such a manner as to decrease the anti-reward set;

--- In the second one, the system sees everything well. In this case, we take the union $\bigcup_{\substack{p_{k}\\k\leqslant i}} \{b_{j}\}$ of geons' sets (remembered up to the position $p_{i}$) of the object $b_{j}$ visible from different camera angles, as the reward $k(p_{i},b_{j})$. The geon sets are chosen to be localized in one place. This reward is maximal when the system has investigated the object from all sides and cannot get more information (i.e., cannot add new geons to the set corresponding to the item). Such a reward definition may demand external methods to decide that the object is investigated from all sides.

Thus, formula \ref{eqmain} mathematically justifies the apparent general behaviour of the system, but not a specific trajectory.

\section{Conclusion}\label{concl}
In this paper, we give the method of evaluating visual information about rewards along a system (e.g., a robot) path, the method of object recognition around the system (robot) and the description of a study process of an environment. We decompose visible objects on sets of geometric primitives (geons) which the system's neural network is pre-trained to recognize. The full combinations of these geons (geons' schemes) corresponding to unknown objects can be stored in the system's memory as new items. During an object investigation, the numbers of geons in such schemes are increased, and they are taken as the position rewards. Thus, the system should move in such a way as to put the highest possible number of geons connected with the object into its database. Later, we suppose to represent in detail the way to generate 3-D objects from the geons' schemes to train the neural network to recognize them without the database.

The calculation on the limited training database gives quite satisfactory geon recognition when the database includes, not only automatically generated images, but also a number of hand-marked ones.

Thus, we consider such a  cognition process of the environment by an intelligent system as a movement in the environment. We have used the intuition of baby-like behaviour during an object investigation in an environment to model a robot-like system to behave. The system movement was represented as a game in the definite game category in which the environment corresponds to the Opponent. He provides the Proponent (the system) with some information about the environment objects which may also be considered as the system goals. Different goals achieving is considered as parallel processes which are represented as the tensor product of correspondent games, and form a comprehensive game.

The game has rewards on its positions (geon sets), which estimate the quantity of the information provided by the environment. We demand the greatest total reward along the system play to choose the path. When the total reward of all parallel processes cannot be improved any more, we may decrease the number of selected goals' achieving processes up to, possibly, the one in the end. An external method of the path calculation can change this algorithm to create an objects' traversal path of the maximum cumulative reward getting.

We choose those goals' achieving processes from all those possible, which have the highest estimations in the system goal lattice. It is so because every goal and the corresponding process of achieving it have a definite correspondent truth value in the lattice. The higher the value lies in the lattice diagram; the higher the priority of the process is. The partial order on the lattice is generated by attention degrees of the system to its goals.

Thus, we consider two types of lattice estimations: the goal lattice value determines the choice of the goal achieving processes from all possible ones, and the position rewards of the game (geon sets) determine the optimal path of these chosen processes in the environment.

Such an approach may be useful in creating intelligent robotic systems. In these systems, ``brains'' can be brought from one robot to another one. However, the first such system must go through the
whole learning process as a real person, from childhood to adulthood. During the process, it must
explore objects of the outside world in themselves, recognize and classify them and evaluate their usefulness based on the goals of the system. Our approach offers one of the steps in this direction.


Such a model corresponds to the approach in which the system intelligence is considered as the consequence of the system predestination or purpose for practical
use, i.e., of the system goal lattice. In simple systems, i.e., ants, such preference structures may be pre-existent. In more complicated ones, these structures may be being built and changed during the system life.

\section{Declarations}
\begin{itemize}
\item Funding {'Not applicable'}
\item Conflict of interest/Competing interests  {'The authors declare that they have no known competing financial interests or personal relationships that could have appeared to influence the work reported in this paper'}
\item Availability of data and material {'Not applicable}
\item Code availability {'Not applicable}
\item Authors' contributions {'First author: the method of describing and assessing the process of exploration of an environment, the method of evaluating the amount of visual information about the object, YOLO training. Second author: the method of procedural generating of training images and corresponding programme'}
\end{itemize}

\bibliography{Maximov_Recognition}

\end{document}